\documentclass[11pt]{article}
\usepackage[a4paper,margin=25mm]{geometry}
\usepackage[T1]{fontenc}
\usepackage{lmodern,microtype,amsmath,amssymb,amsthm,mathtools}
\usepackage{graphicx,booktabs,array,xcolor,enumitem,caption,subcaption}
\usepackage[numbers,sort&compress]{natbib}
\usepackage{hyperref}
\hypersetup{colorlinks=true,linkcolor=blue!45!black,citecolor=blue!45!black,urlcolor=blue!45!black,pdftitle={Verification Pulses and the Cost of Escaping Wrong Consensus},pdfauthor={Shivam Gupta}}
\usepackage{fancyhdr}
\usepackage{placeins}
\setlist{itemsep=2pt,topsep=4pt}
\graphicspath{{figures/}}
\newtheorem{theorem}{Theorem}
\newtheorem{proposition}[theorem]{Proposition}
\newtheorem{corollary}[theorem]{Corollary}

\theoremstyle{remark}\newtheorem{remark}[theorem]{Remark}
\newcommand{\E}{\mathbb E}\newcommand{\Prob}{\mathbb P}
\newcommand{\N}{\mathcal N}\newcommand{\ind}{\mathbf 1}
\newcommand{\eps}{\varepsilon}
\newcommand{\DetRecoverySixteen}{52.37}
\newcommand{\MaxMCZ}{1.33}
\newcommand{\BudgetRows}{16 & 9 & 0.476 & 23 & 18.93\\
64 & 36 & 0.484 & 60 & 55.76\\
256 & 144 & 0.483 & 187 & 183.15\\
1024 & 574 & 0.490 & 657 & 652.83\\
4096 & 2293 & 0.495 & 2456 & 2451.75\\}
\newcommand{\LiveRows}{Mini & Early & 0.337 & 0.403 & 11 & 6\\
Mini & Spread & 0.597 & 0.766 & 3 & 1\\
Mini & Late & 0.455 & 0.946 & 0 & 0\\
Nano & Early & 0.514 & 0.504 & 8 & 7\\
Nano & Spread & 0.424 & 0.585 & 6 & 1\\
Nano & Late & 0.306 & 0.726 & 4 & 0\\}
\newcommand{\ContrastRows}{Mini & early minus spread & -0.260 & -0.543 & 0.035\\
Mini & late minus spread & -0.142 & -0.332 & 0.075\\
Nano & early minus spread & 0.090 & -0.078 & 0.229\\
Nano & late minus spread & -0.118 & -0.295 & 0.066\\}
\newcommand{\LiveResultsText}{For mini, mean terminal wrong fractions were 0.337 (early), 0.597 (spread), and 0.455 (late). For nano, mean terminal wrong fractions were 0.514 (early), 0.424 (spread), and 0.306 (late). All four adjusted intervals include zero. The observed means therefore do not establish a schedule advantage at the declared family level. The direction of the early-minus-spread mean also differs between models. Integrated error and terminal error measure different objectives; in particular, the late pulse benefits from acting immediately before the terminal measurement.}
\newcommand{\PredictionText}{Across the six model--schedule combinations, mean absolute trajectory discrepancies ranged from 0.008 to 0.066; absolute terminal discrepancies ranged from 0.003 to 0.090. Appendix~\ref{app:experiment} reports every comparison.}
\newcommand{\PredictionTable}{\begin{table}[h]
\centering\small
\caption{Independently calibrated finite-state predictions and held-out means. MAE is averaged over the 129 recorded states.}
\begin{tabular}{llrrr}\toprule
Model & Schedule & Predicted final & Observed final & Trajectory MAE\\\midrule
Mini & Early & 0.316 & 0.337 & 0.039\\
Mini & Spread & 0.687 & 0.597 & 0.025\\
Mini & Late & 0.458 & 0.455 & 0.008\\
Nano & Early & 0.434 & 0.514 & 0.066\\
Nano & Spread & 0.357 & 0.424 & 0.040\\
Nano & Late & 0.283 & 0.306 & 0.028\\
\bottomrule\end{tabular}\end{table}
\begin{table}[htb]
\centering\small
\caption{Post-analysis terminal-distribution audit. Entries are endpoint probabilities, not eventual absorption probabilities. Observed values are proportions of 18 episodes.}
\begin{tabular}{llrrrr}\toprule
& & \multicolumn{2}{c}{All correct} & \multicolumn{2}{c}{All wrong}\\
Model & Schedule & Predicted & Observed & Predicted & Observed\\\midrule
Mini & Early & 0.617 & 0.611 & 0.258 & 0.333\\
Mini & Spread & 0.085 & 0.167 & 0.050 & 0.056\\
Mini & Late & 0.004 & 0.000 & 0.000 & 0.000\\
Nano & Early & 0.040 & 0.444 & 0.024 & 0.389\\
Nano & Spread & 0.046 & 0.333 & 0.001 & 0.056\\
Nano & Late & 0.040 & 0.222 & 0.000 & 0.000\\
\bottomrule\end{tabular}\end{table}}
\newcommand{\TailLower}{0.030}
\newcommand{\TailUpper}{0.235}
\newcommand{\UsageText}{There were 13,392 terminal requests, 13,392 valid structured answers, and 0 invalid terminal outputs. The logs contain 7 additional transport attempts. These attempts are included in Table~\ref{tab:usage}.}
\newcommand{\UsageRows}{Calibration & Mini & 385 & 384 & 67,840 & 1,920\\
Calibration & Nano & 384 & 384 & 67,840 & 1,920\\
Pilot & Mini & 49 & 48 & 8,480 & 240\\
Pilot & Nano & 48 & 48 & 8,480 & 240\\
Trajectories & Mini & 6,268 & 6,264 & 1,106,640 & 31,320\\
Trajectories & Nano & 6,265 & 6,264 & 1,106,640 & 31,320\\}
\newcommand{\EstimatedCost}{0.6584}

\title{\vspace{-8mm}\bfseries Verification Pulses and the Cost of\protect\\ Escaping Wrong Consensus}
\author{Shivam Gupta\\\small Independent Researcher\\\small\href{mailto:shivam1720406@gmail.com}{shivam1720406@gmail.com}}
\date{23 September 2026}

\begin{document}
\maketitle\thispagestyle{plain}
\begin{abstract}
External verification can correct individual outputs while leaving a self-reinforcing population in the basin of a wrong consensus. We study how the timing and addressing of a fixed verification budget affect recovery in an asynchronous binary register. For a general nonlinear response, we derive the minimum fuel required to cross a basin boundary under a peak verification constraint. For a finite population, an exact birth--death calculation gives the probability of subsequent wrong consensus after a pulse. Our main asymptotic result identifies the critical budget window: a leading term $N\log(x_0/b)$ and a correction of order $\sqrt N$, with separate variance contributions from repeated verification targets and autonomous amplification after verification stops. The distinction is substantial: with 16 majority-updated slots and 14 initially wrong, 9 random checks cross the mean-field budget threshold, whereas 23 are required for 95\% eventual recovery in the exact model. A prospectively specified experiment records 13,392 language-model responses, including calibration and 108 held-out trajectories. Calibration produces different fitted response regimes, but all four adjusted schedule-comparison intervals include zero. A distributional audit also finds that modest mean-prediction error can conceal a large underestimate of terminal consensus occupancy. The results support risk-calibrated reset scheduling under a specified update contract, while explicitly separating it from distinct-target checking and unrestricted evidence broadcast.
\end{abstract}
\noindent\textbf{Keywords:} nonlinear consensus; stochastic resetting; basin escape; minimum-fuel control; finite-population scaling; language-model verification.

\section{Introduction}
A collection of agents can agree because each has encountered the same evidence, because each has copied the others, or because the system repeatedly overwrites dissenting state. These mechanisms have different implications for verification. If a correct update is immediately exposed to a strongly reinforcing population of incorrect reports, its effect need not persist. The relevant design question is therefore not only how many checks a system performs, but whether their timing moves the system into a region where its ordinary dynamics support the correction.

This paper studies that question through a specific update contract. A shared binary record is represented by $N$ mutable slots. An ordinary update samples prior reports and replaces one uniformly selected slot. A verification update replaces one slot with an authoritative value. During a verification pulse, ordinary updates pause; after the pulse, ordinary updates resume. The state variable is the fraction of slots that disagree with the authoritative record. It is an error fraction with an externally defined meaning, not a confidence score or a subjective opinion.

The setting is motivated by agent memory reconciliation and replicated operational state. Its restrictions matter. If a system can read the authoritative value once and atomically broadcast it to every slot, a direct reset is preferable. If it can avoid repeated verification targets, its budget law changes. Our optimization is over the stated replacement protocol, not over every possible agent architecture. This separation makes the mathematical and engineering claims assessable.

Three levels of analysis are required. A deterministic mean-field equation identifies equilibria and control barriers. A finite population can return to the wrong consensus even after crossing a deterministic boundary, so a stochastic calculation is needed for recovery probabilities. Finally, a language model need not implement the assumed response curve; its local response and the resulting trajectory predictions must be measured independently.

Our main contribution is a verification-budget law that links these levels. For a basin boundary $b$ and initial wrong fraction $x_0>b$, a full verification pulse has the deterministic crossing cost $\log(x_0/b)$ per slot. However, the budget for a prescribed stochastic recovery probability is
\begin{equation}
 K_\delta=N\log(x_0/b)+\frac{\sqrt{N(v_{\rm p}+v_{\rm d})}}{b}\,\Phi^{-1}(1-\delta)+o(\sqrt N),
 \label{eq:headline}
\end{equation}
where $v_{\rm p}$ measures uncertainty left by repeated check targets and $v_{\rm d}$ measures subsequent autonomous amplification. Equation~\eqref{eq:headline} is for a fixed number of checks; replacing their clock by a Poisson clock changes the first variance. An exact recurrence, rather than the asymptotic expression, supplies small-population budgets.

We also derive a peak-constrained minimum-fuel law and an explicit tradeoff: stronger majority reinforcement increases the constant verification rate needed to remove bistability, but decreases the stochastic margin of a pulse that has crossed the basin boundary. This distinguishes persistent suppression from finite-budget recovery. The accompanying experiment measures two pinned language models under the same reconciliation protocol, with separate calibration and evaluation records, matched verification counts, and full raw-response provenance.

\section{Related work and contribution boundary}
\label{sec:related}
Local majority rules, inflexible agents, and tipping behavior have long been studied in opinion dynamics; Galam's model is an early example~\citep{galam2002}. The noisy three-majority protocol has a communication-noise phase transition~\citep{damore2025}. Our elementary saddle-node calculation is background within this tradition. Directed replacement by a correct value differs from symmetric corruption of communicated opinions, so its numerical threshold is not the same noise threshold.

The closest scheduling antecedent is Kumar et al.~\citep{kumar2018}, who study budgeted influence in an asynchronous binary urn, including perfect directed replacements. Their finite-horizon analysis compares early and late blocks and shows that their ordering depends on the response dynamics. The intervention mechanism and the general timing question are therefore established. Our target is the probability of autonomous wrong consensus after intervention stops, with an exact budget certificate and a critical-window correction.

The closest probabilistic antecedent is the work of Becker and Panagiotou~\citep{becker2026}, which derives winning-opinion and runtime limit laws for a broad class of synchronous consensus protocols. Gaussian selection near an unstable equilibrium is therefore not a new principle. We use asynchronous one-slot updates, derive their scale-function boundary layer, and compose that layer with a fixed-count pure-death pulse. The resulting sum of occupancy and escape variances, and its inversion into a verification budget, are the specific asymptotic contribution here.

Bang--bang control and verification arguments are classical~\citep{evans2024}. Optimal control has been developed for opinion distributions with stubborn agents~\citep{kozitsin2022} and for evolving interaction networks~\citep{nugent2024}. We do not claim that concentrated intervention is a new control idea. Our scalar fuel bound concerns a different admissible action---replacing an ordinary update by a verified update---and gives an explicit feasibility condition under a peak-rate cap. Stochastic resetting of voter systems has also been studied~\citep{grange2023}; resetting to initial opinions at a continuing Poisson rate differs from a finite pulse of truth-directed resets followed by autonomous evolution. Fluid limits provide the standard basis for the deterministic approximation~\citep{darling2002}.

Multiagent language-model debate predates this work~\citep{du2024}. Recent experiments connect LLM communities to statistical mechanics and predict their collective behavior~\citep{el2026}; other work shows why correlated model errors can defeat polling-based truthfulness~\citep{denisov2026}. These studies motivate measuring interaction dynamics but do not make a pooled scalar response automatically valid. Our experiment is a deliberately limited mechanism test: a binary record with three prior reports and no fresh evidence during ordinary reconciliation. It does not reproduce open-ended debate or establish performance on a production agent benchmark.

The literature review was conducted on 23 September 2026 and included control, stochastic resetting, majority dynamics, and contemporary agent systems. The repository records the search scope and the closest comparisons. The priority claim is limited to the budget composition and the associated evaluation found in that review; it is not an assertion that all uses of verification pulses are unprecedented.

\section{An asynchronous verification model}
\label{sec:model}
Let $I_k\in\{0,\ldots,N\}$ be the number of incorrect slots after update $k$, and let $X_k=I_k/N$. At each ordinary update, the target slot is sampled uniformly and independently of the reports. Conditional on $X_k=x$, the new value is wrong with probability $g(x)$. Here $g:[0,1]\to[0,1]$ is a response function. The finite-state reduction assumes that this conditional probability depends only on $x$ and that the new draw has no additional dependence on the old target value. This is exact for the declared stochastic model and a closure assumption for pooled LLM data.

For three independent report samples with replacement, define
\begin{equation}
 a_j=\Prob(\text{wrong output}\mid j\text{ wrong reports}),\qquad
 g(x)=\sum_{j=0}^3 a_j {3\choose j}x^j(1-x)^{3-j}.
 \label{eq:bernstein}
\end{equation}
The ideal majority reader has $a=(0,0,1,1)$ and
\begin{equation}
 g_3(x)=3x^2-2x^3.
 \label{eq:majority}
\end{equation}
Equation~\eqref{eq:bernstein} follows from the sampling protocol; it is not a regression fit to trajectory endpoints.

A verification update writes an incorrect value with probability $\eps$, independently of the prior reports. Most finite-population results below use perfect verification, $\eps=0$. If a fraction $u$ of updates are verification updates, the wrong-write probability is $p_u(x)=(1-u)g(x)+u\eps$. The upward and downward transition probabilities are
\begin{equation}
 A_u(x)=(1-x)p_u(x),\qquad D_u(x)=x[1-p_u(x)].
 \label{eq:rates}
\end{equation}
The remaining probability is a self-loop. In time measured in $N$ updates, the conditional drift is $p_u(x)-x$, giving
\begin{equation}
 \dot x=(1-u(t))g(x)+u(t)\eps-x,\qquad 0\le u(t)\le\bar u\le1.
 \label{eq:ode}
\end{equation}
Verification fuel is $B=\int u(t)\,dt$. Thus $NB$ is the corresponding number of verified replacement opportunities. It is not the number of successful corrections, the number of unique source documents, or a monetary cost. Different update durations or access prices require a different cost model.

\subsection{The constant-rate saddle node}
For $g=g_3$ and $\eps=0$, the nonzero equilibria satisfy
\begin{equation}
 x_{\pm}(u)=\frac{3\pm\sqrt{9-8/(1-u)}}{4},\qquad 0\le u\le\frac19.
 \label{eq:branches}
\end{equation}
The lower branch is unstable and the upper branch is stable. They merge at $(u,x)=(1/9,3/4)$ in a saddle-node bifurcation. The state $x=0$ is stable throughout, and for $u>1/9$ it is globally attracting on $[0,1]$. At the critical rate, the double root is semistable. Figure~\ref{fig:bifurcation} displays this standard calculation and three equal-fuel schedules.

The deterministic and finite-population conclusions must not be conflated. For fixed finite $N$, perfect verification continued forever at any $u>0$ eventually reaches the all-correct absorbing state with probability one. The upper deterministic branch describes a limiting attractor and possible finite-system metastability, not a second absorbing state under perpetual checking. Our pulse results concern a finite number of checks followed by no further verification.

\begin{figure}[t]
 \centering\includegraphics[width=\textwidth]{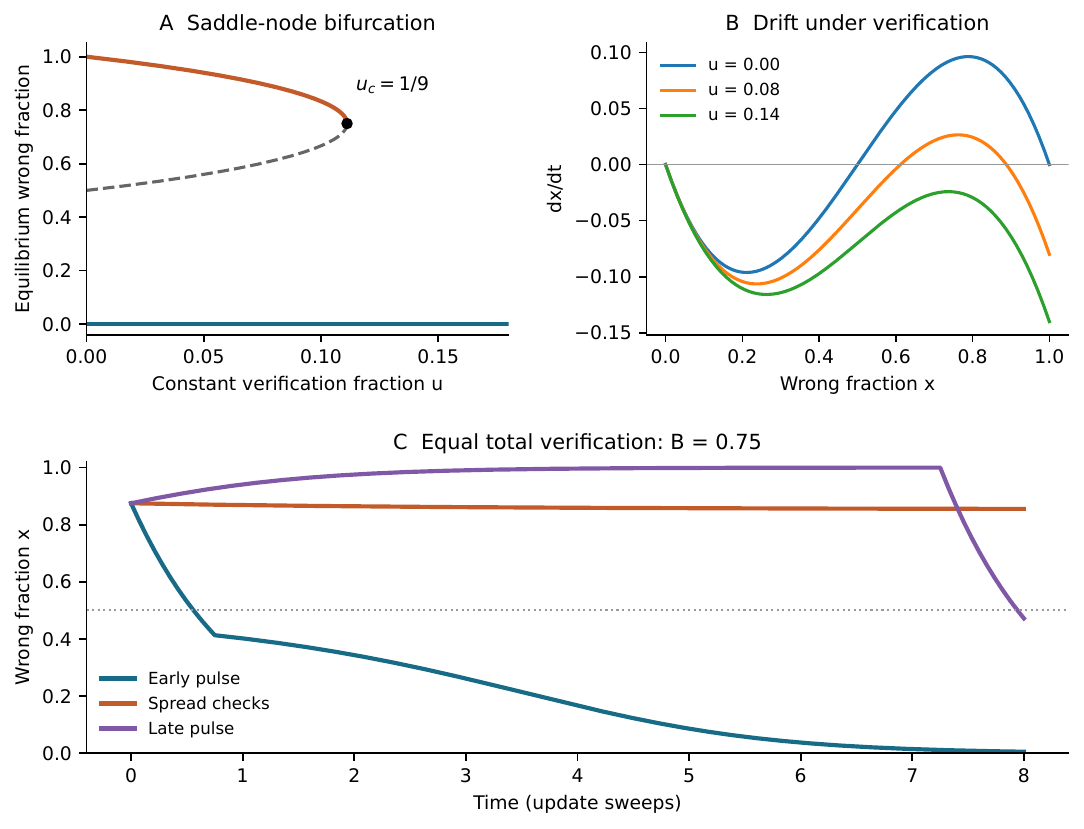}
 \caption{Deterministic majority response. (A) Stable equilibrium branches are solid and the unstable branch is dashed. (B) Directed verification changes the drift. (C) All schedules spend $B=0.75$ over eight update sweeps from $x_0=0.875$. Early and late pulses use $u=1$ for duration $B$; spread verification uses $u=B/8$. These are solutions of the specified ODE, not measured LLM trajectories.}
 \label{fig:bifurcation}
\end{figure}

\section{Minimum fuel and peak verification capacity}
\label{sec:fuel}
Suppose an uncontrolled response has a basin boundary $b$ and reinforces error on $(b,x_0)$: $g(b)=b$ and $g(x)>x$ for $b<x<x_0$. At $x_0$ equality is permitted, as when starting at the wrong stable equilibrium. We require $g(x_0)\ge x_0$. A verification pulse seeks to reach $x<b$, where the uncontrolled drift can move toward the desirable state.

\begin{theorem}[Minimum fuel for basin crossing]
\label{thm:fuel}
Let $g$ be continuously differentiable, $0\le\eps<b<x_0\le1$, and assume the reinforcement conditions above. For a peak rate $0<\bar u\le1$, define
\begin{equation}
 H_{\bar u}(x)=x-(1-\bar u)g(x)-\bar u\eps.
 \label{eq:H}
\end{equation}
If $H_{\bar u}>0$ on $[b,x_0]$, the infimum fuel among measurable controls $0\le u\le\bar u$ that cross below $b$ in finite time is
\begin{equation}
 B_* =\int_b^{x_0}\frac{\bar u}{H_{\bar u}(x)}\,dx.
 \label{eq:fuel}
\end{equation}
Constant maximal control attains this cost to reach $b$; extending it arbitrarily briefly approaches the infimum for strict crossing. If $H_{\bar u}$ is nonpositive anywhere in $[b,x_0]$, finite-time crossing is impossible. In particular, for $\bar u=1$,
\begin{equation}
 B_*=\log\frac{x_0-\eps}{b-\eps}.
 \label{eq:logfuel}
\end{equation}
\end{theorem}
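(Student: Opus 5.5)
We argue pointwise along the trajectory by a linear-in-$u$ comparison. Write the drift as $f(x,u)=g(x)-x+u(\eps-g(x))$. On $[b,x_0]$ we have $g(x)\ge x>\eps$: at interior points by the reinforcement condition, at the endpoints by $g(b)=b$ and $g(x_0)\ge x_0$. Hence the coefficient $\eps-g(x)$ is strictly negative, and verification always pushes $x$ down there.

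The key identity is that for every $u\in[0,\bar u]$ and $x\in[b,x_0]$,
\[
 u\,\frac{H_{\bar u}(x)}{\bar u}+f(x,u)=(g(x)-x)\Bigl(1-\frac{u}{\bar u}\Bigr)\ge0 .
\]
To verify it, note that $H_{\bar u}(x)/\bar u=(x-g(x))/\bar u+g(x)-\eps$; multiplying by $u$ and adding $f$, the $u(g-\eps)$ terms cancel and what remains is $(g-x)(1-u/\bar u)$. Rearranging gives $-\dot x\le u\,H_{\bar u}(x)/\bar u$. When $H_{\bar u}>0$ on $[b,x_0]$, this says $u\,dt\ge \bar u\,(-dx)/H_{\bar u}(x)$ whenever the state lies in the strip.

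Now take any admissible control that crosses below $b$ at a finite time $T$. Let $\tau$ be the last time before $T$ with $x=x_0$; the state starts at $x_0$, so $\tau$ exists. Restricting to $[\tau,T]$, the trajectory is absolutely continuous and stays in $[b,x_0]$. Define $\Psi(x)=\int_b^x \bar u/H_{\bar u}$, which is $C^1$ because $H_{\bar u}$ is continuous and positive. Then
\[
 \frac{d}{dt}\Psi(x(t))=\frac{\bar u\,\dot x}{H_{\bar u}(x)}\ge -u(t).
\]
Integrating from $\tau$ to $T$ gives $\int u\,dt\ge \Psi(x_0)-\Psi(b)=B_*$. For attainment, set $u\equiv\bar u$. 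Then $\dot x=-H_{\bar u}(x)$, so the time to reach $b$ is $\int_b^{x_0}dx/H_{\bar u}$ and the fuel spent is exactly $B_*$. Continuous positivity of $H_{\bar u}$ at $b$ lets us continue a short time past $b$ at an arbitrarily small extra cost, which yields the infimum for strict crossing.

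For infeasibility, suppose $H_{\bar u}(x^*)\le0$ for some $x^*\in[b,x_0]$. Then for every $u\le\bar u$,
\[
 f(x^*,u)\ge f(x^*,\bar u)=-H_{\bar u}(x^*)\ge0,
\]
using that $f$ is decreasing in $u$ at $x^*$. So the drift at $x^*$ is nonnegative under any admissible control. Every trajectory from $x_0$ must pass through $x^*$, and we argue it cannot move below it. This is the main obstacle: a nonnegative drift at a single point does not by itself forbid crossing, for example when the drift vanishes only at $x^*$. I would handle it by comparison with the autonomous ODE $\dot y=-H_{\bar u}(y)$. Every admissible trajectory satisfies $\dot x\ge -H_{\bar u}(x)$, and since $H_{\bar u}$ is $C^1$ (because $g$ is), solutions of the comparison ODE are unique. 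The point $x^*$ is an equilibrium or a repelling barrier for $y$, so a solution starting at $x_0$ cannot pass below the largest zero of $H_{\bar u}$ in $[b,x_0]$ in finite time. The standard comparison theorem then gives $x(t)\ge y(t)$, so the controlled trajectory cannot cross either.

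Finally, the case $\bar u=1$ is immediate: $H_1(x)=x-\eps>0$, so $B_*=\int_b^{x_0}dx/(x-\eps)=\log\frac{x_0-\eps}{b-\eps}$.
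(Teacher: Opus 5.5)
Your proposal is correct and follows essentially the paper's argument: the same verification identity $\tfrac{d}{dt}\Psi(x(t))+u=(\bar u-u)(g-x)/H_{\bar u}\ge0$, maximal control for attainment, and a comparison with the maximal-control flow at a zero of $H_{\bar u}$ in $(b,x_0]$ for infeasibility. The one technical difference is that you localize with the last visit to $x_0$ where the paper extends $W$ by constants; for that to work, take $T$ to be the first hitting time of $b$, so the path really stays in $[b,x_0]$ on $[\tau,T]$.
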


\begin{proof}
On $[b,x_0]$, define $W(x)=\int_b^x \bar u/H_{\bar u}(z)\,dz$, and extend $W$ by constants below $b$ and above $x_0$. It is Lipschitz. Along any admissible trajectory, its almost-everywhere derivative satisfies, while $b<x<x_0$,
\begin{equation}
 \frac{d}{dt}W(x(t))+u(t)
 =\frac{(\bar u-u(t))[g(x(t))-x(t)]}{H_{\bar u}(x(t))}\ge0.
 \label{eq:verificationidentity}
\end{equation}
Outside that interval, $\dot W=0$ and the same inequality holds. Integration to first crossing gives $B\ge W(x_0)$. For $u=\bar u$, $\dot x=-H_{\bar u}(x)<0$ and separation of variables gives equality upon reaching $b$. Since $H_{\bar u}(b)=\bar u(b-\eps)>0$, an arbitrarily short extension crosses it. For infeasibility, continuity and $H_{\bar u}(b)>0$ imply a zero barrier $c\in(b,x_0]$ if $H_{\bar u}$ is ever nonpositive. Maximal control cannot cross its own equilibrium $c$, and every smaller control has at least as large a vector field on this interval because $g(x)>\eps$. Scalar comparison rules out crossing. Finally $H_1(x)=x-\eps$, yielding~\eqref{eq:logfuel}.
\end{proof}

This proof is a direct scalar verification argument. It covers controls that move upward before attempting recovery, not just monotone candidate schedules. The strict-crossing problem generally has an infimum rather than an attained minimum: stopping exactly on an unstable equilibrium does not enter its desirable basin.

The peak-rate condition can equivalently be written
\begin{equation}
 \bar u>\max_{x\in[b,x_0]}\frac{g(x)-x}{g(x)-\eps}.
 \label{eq:peak}
\end{equation}
For majority-of-three, perfect checks, and $x_0\ge3/4$, this threshold is $1/9$. Increasing total allocated fuel cannot compensate for a peak cap below the barrier threshold. Imperfect checks increase the cost and can make a previously feasible cap infeasible (Figure~\ref{fig:capacity}).

\begin{corollary}[Reinforcement affects steady and pulsed control differently]
\label{cor:reinforcement}
For $0<a\le1$, let $g_a(x)=(1-a)x+a g_3(x)$. The constant-verification saddle node occurs at $u_c=a/(8+a)$ and $x=3/4$. The full-pulse crossing cost from any $x_0>1/2$ is $\log(2x_0)$, independent of $a$. The post-pulse variance in Theorem~\ref{thm:window} is $v_{\rm d}=1/(2a)$.
\end{corollary}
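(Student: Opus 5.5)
The plan is to reduce all three claims to explicit calculations with the factorization
\[
 g_a(x)-x=a\,[g_3(x)-x]=a\,x(1-x)(2x-1),
\]
which follows from $g_3(x)-x=3x^2-2x^3-x=x(1-x)(2x-1)$. This identity shows that, for every $0<a\le1$, the uncontrolled dynamics have the same three equilibria $0$, $1/2$, and $1$. It also shows that $g_a(x)>x$ on $(1/2,1)$. The basin boundary is therefore $b=1/2$, independently of $a$, and the reinforcement hypotheses of Theorem~\ref{thm:fuel} hold for every $x_0\in(1/2,1]$.

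For the saddle node, I would set $\eps=0$ and divide the equilibrium condition $(1-u)g_a(x)=x$ by $x>0$. This gives $(1-u)\,\phi_a(x)=1$, where
\[
 \phi_a(x)=g_a(x)/x=(1-a)+a(3x-2x^2).
\]
The quadratic $\phi_a$ attains its maximum at $x=3/4$, with value $1+a/8$. Nonzero equilibria therefore exist exactly when $(1-u)(1+a/8)\ge1$, and the two branches merge at $x=3/4$ when
\[
 u_c=\frac{a/8}{1+a/8}=\frac{a}{8+a}.
\]
This generalizes~\eqref{eq:branches}: setting $a=1$ recovers $u_c=1/9$. Stability of the two branches, and global attraction of $0$ for $u>u_c$, then follow exactly as in the $a=1$ case.

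For the crossing cost, I would apply Theorem~\ref{thm:fuel} with $\bar u=1$ and $\eps=0$. Then $H_1(x)=x>0$ on $[b,x_0]$, so the feasibility condition holds trivially. Formula~\eqref{eq:logfuel} with $b=1/2$ gives $B_*=\log(2x_0)$. This is visibly independent of $a$, because full verification suppresses $g_a$ entirely during the pulse.

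The step I expect to be the main obstacle is the variance claim, since it depends on the definition of $v_{\rm d}$ in Theorem~\ref{thm:window}, which appears later in the paper. I would use the natural boundary-layer form expected from that theorem. Near the unstable equilibrium $b$, the scaled fluctuation has diffusion coefficient $\sigma^2=A_0(b)+D_0(b)$, taken from~\eqref{eq:rates} in time units of $N$ updates. It has linear repelling rate $\lambda=g'(b)-1$. The scale-function escape probability from $b+y/\sqrt N$ is then Gaussian with variance $v_{\rm d}=\sigma^2/(2\lambda)$. Here $g_a(1/2)=1/2$ gives $A_0=D_0=1/4$, so $\sigma^2=1/2$. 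Also $g_a'(1/2)=(1-a)+a\cdot\tfrac32=1+a/2$, so $\lambda=a/2$. Together these give $v_{\rm d}=1/(2a)$.

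Before finalizing, I would check this against the exact form of $v_{\rm d}$ in Theorem~\ref{thm:window}. In particular, I would confirm whether the factor of $b$ appearing in~\eqref{eq:headline} is absorbed into $v_{\rm p}$ rather than $v_{\rm d}$. If the definition there differs by such a normalization, I would adjust the constant accordingly; the $1/a$ dependence comes from $\lambda=a/2$ and is robust to that choice.

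Finally, I would state the interpretation the corollary is meant to record. The steady-control threshold $u_c$ increases in $a$, whereas $v_{\rm d}$ decreases in $a$. Stronger reinforcement therefore makes persistent suppression harder, but it reduces the stochastic margin needed once a pulse has crossed $b=1/2$.
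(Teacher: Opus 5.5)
Your proposal is correct and follows the paper's proof: the drift factorization (your $(1-u)\phi_a(x)=1$ is the paper's $(1-u)a(1-x)(2x-1)=u$) maximized at $x=3/4$, Theorem~\ref{thm:fuel} with $\bar u=1$, and $\lambda=g_a'(1/2)-1=a/2$. Your hedge is unnecessary, because Theorem~\ref{thm:window} defines $v_{\rm d}=b(1-b)/\lambda$, which coincides with your $\sigma^2/(2\lambda)$ since $A_0(b)=D_0(b)=b(1-b)$; the $1/b$ in the budget law comes from the pulse shift $-bs$ in the pulse central limit theorem, not from $v_{\rm d}$.
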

\begin{proof}
The controlled drift factors as $x[(1-u)a(1-x)(2x-1)-u]$. The maximum of $(1-x)(2x-1)$ is $1/8$, at $3/4$, giving $u_c$. The uncontrolled boundary is $b=1/2$ and $g_a'(b)-1=a/2$. Apply Theorems~\ref{thm:fuel} and~\ref{thm:window}.
\end{proof}
Thus increasing reinforcement raises the constant rate needed to eliminate bistability but reduces the critical-window risk of a pulse with a positive excess budget. The latter statement is asymptotic and holds for fixed $a>0$; it is not uniform as $a\downarrow0$.

\subsection{Why this is not a universal early-verification rule}
Theorem~\ref{thm:fuel} optimizes fuel to a first crossing. It does not optimize every finite-horizon terminal loss. For a synchronous majority operation $D(x)=g_3(x)$ and a partial correction $V(x)=qx$,
\begin{equation}
 D(V(x))-V(D(x))=q(1-q)x^2[2(1+q)x-3].
 \label{eq:commutator}
\end{equation}
Its sign changes when the retained fraction $q>1/2$. At sufficiently high initial error, one weak check can be better after the majority operation. This two-operation example is separate from the asynchronous process; it illustrates why objective and timing must be specified rather than inferred from the word ``pulse.''

\begin{figure}[t]
 \centering\includegraphics[width=.94\textwidth]{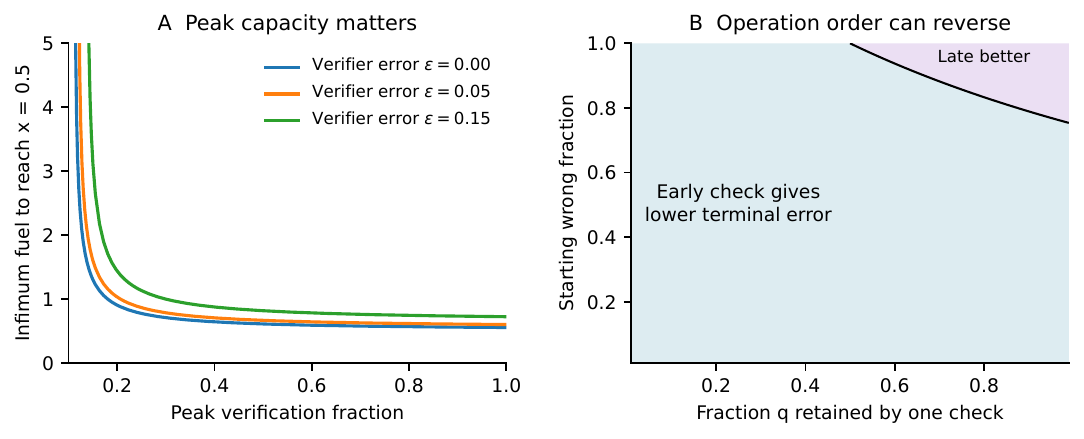}
 \caption{Control boundaries. (A) Minimum fuel under a peak cap from $x_0=0.875$ to $b=0.5$ for majority-of-three. Portions outside the plotted range include infeasible caps; the required fuel diverges as a barrier becomes impassable. (B) The sign of the exact commutator~\eqref{eq:commutator} changes, ruling out a universal early-is-better claim for terminal loss.}
 \label{fig:capacity}
\end{figure}

\section{Exact finite-population recovery budgets}
\label{sec:finite}
Assume $g(0)=0$, $g(1)=1$, and $0<g(x)<1$ for $0<x<1$. After verification stops, the endpoints are absorbing and the interior birth and death probabilities are strictly positive. Write $A_i=(1-i/N)g(i/N)$ and $D_i=(i/N)[1-g(i/N)]$. Let $h_N(i)$ be the probability of eventual absorption at $N$ when starting at $i$ wrong slots.

\begin{proposition}[Scale function and pulse certificate]
\label{prop:exact}
Define $w_0=1$ and $w_j=\prod_{\ell=1}^j D_\ell/A_\ell$ for $1\le j\le N-1$. Then
\begin{equation}
 h_N(i)=\frac{\sum_{j=0}^{i-1}w_j}{\sum_{j=0}^{N-1}w_j},\quad h_N(0)=0.
 \label{eq:committor}
\end{equation}
During $K$ perfect checks with independently sampled uniform targets, let $\pi_k(r)=\Prob(I_k=r)$, starting from $\pi_0(r)=\ind\{r=i_0\}$. The exact recursion is
\begin{equation}
 \pi_{k+1}(r)=\left(1-\frac rN\right)\pi_k(r)+\frac{r+1}{N}\pi_k(r+1),
 \label{eq:pulse}
\end{equation}
with out-of-range terms zero. The subsequent wrong-consensus probability is
\begin{equation}
 R_N(K;i_0)=\sum_{r=0}^{N}\pi_K(r)h_N(r).
 \label{eq:risk}
\end{equation}
It is nonincreasing in $K$. Therefore the smallest $K$ with $R_N(K;i_0)\le\delta$ is an exact model-conditional budget.
\end{proposition}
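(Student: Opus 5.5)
The proof has four parts: the committor formula from a first-step equation, the pulse recursion from a one-step computation, the risk formula from the strong Markov property, and monotonicity in $K$ from a coupling plus monotonicity of $h_N$.

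\emph{Committor.} After verification stops, the chain on $\{0,\ldots,N\}$ is a birth--death chain with absorbing endpoints. The endpoints absorb because $g(0)=0$ gives $A_0=0$, and $g(1)=1$ gives $D_N=0$. Interior rates are strictly positive because $0<g<1$ on $(0,1)$. Conditioning on the first step, and noting that self-loops cancel, $h_N$ is the unique solution of
\begin{equation*}
 A_i[h_N(i+1)-h_N(i)]=D_i[h_N(i)-h_N(i-1)],\qquad 1\le i\le N-1,
\end{equation*}
with $h_N(0)=0$ and $h_N(N)=1$. Absorption is almost sure, since every interior state reaches an endpoint with positive probability within $N$ steps; hence the harmonic function equals the absorption probability. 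Setting $\Delta_j=h_N(j+1)-h_N(j)$ gives $\Delta_j=w_j\Delta_0$. Summing and normalising by $h_N(N)=1$ yields~\eqref{eq:committor}.

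\emph{Pulse recursion and risk.} A perfect check picks a uniformly random slot and writes the correct value. From $r$ wrong slots it therefore moves to $r-1$ with probability $r/N$ and otherwise stays at $r$. The law of total probability gives~\eqref{eq:pulse}, a pure-death chain. The check targets are independent of the post-pulse dynamics, so the strong Markov property at the end of the pulse gives $R_N(K;i_0)=\E[h_N(I_K)]$, which is~\eqref{eq:risk}.

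\emph{Monotonicity in $K$.} This is the only step needing care.
\begin{enumerate}
 \item Since all $w_j>0$, the formula~\eqref{eq:committor} shows $h_N$ is nondecreasing in $i$.
 \item Couple the pulses of lengths $K$ and $K+1$ by running the same target sequence. Under this coupling $I_{K+1}\le I_K$ pathwise.
 \item Therefore $h_N(I_{K+1})\le h_N(I_K)$ pathwise, and taking expectations gives $R_N(K+1;i_0)\le R_N(K;i_0)$.
\end{enumerate}
Equivalently, one can verify directly that $\sum_r\pi_{k+1}(r)h_N(r)-\sum_r\pi_k(r)h_N(r)=-\sum_r\pi_k(r)(r/N)[h_N(r)-h_N(r-1)]\le0$.

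Monotonicity makes $\{K:R_N(K;i_0)\le\delta\}$ an up-set. Its minimum is therefore the exact model-conditional budget: every $K$ at or above it meets the risk target, and every $K$ below it fails. The only subtlety I expect is stating the conditioning correctly, namely that ordinary updates pause during the pulse, so the pulse and the post-pulse chain compose exactly as described.
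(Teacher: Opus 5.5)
Your proof is correct and follows the paper's argument: the first-step harmonic equation with increments proportional to $w_j$, the one-step pure-death recursion, conditioning on $I_K$ at the end of the pulse, and monotonicity from the pathwise nonincreasing pulse together with $h_N$ increasing. Your same-target coupling and the direct identity $\sum_r[\pi_{k+1}(r)-\pi_k(r)]h_N(r)=-\sum_r\pi_k(r)(r/N)[h_N(r)-h_N(r-1)]$ only make the paper's one-line monotonicity step explicit.
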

\begin{proof}
The harmonic equation is $A_i[h_N(i+1)-h_N(i)]=D_i[h_N(i)-h_N(i-1)]$. Its positive increments are proportional to $w_i$, and the boundary difference is one, giving~\eqref{eq:committor}. A perfect check removes a wrong slot with probability $r/N$, proving~\eqref{eq:pulse}. Conditioning on the state at the end of the pulse proves~\eqref{eq:risk}. The pure-death process is pathwise nonincreasing, and $h_N$ is increasing, proving monotonicity.
\end{proof}

The scale function itself is classical birth--death theory; its composition with the counted pulse is the certificate used here. The implementation computes products in logarithmic form and uses the nonnegative recurrence~\eqref{eq:pulse}, avoiding cancellation in an inclusion--exclusion formula. If the endpoints are not absorbing, the same scale function gives a first-hit probability, not permanent recovery. For measured responses with endpoint innovation, we instead propagate the full transition matrix to a fixed horizon.

\subsection{Addressing and broadcast baselines}
If $K\le N$ distinct targets are sampled uniformly, the number of initially incorrect unvisited slots is hypergeometric:
\begin{equation}
 \pi_K^{\rm distinct}(r)=
 \frac{{i_0\choose r}{N-i_0\choose N-K-r}}{{N\choose N-K}}.
 \label{eq:hypergeom}
\end{equation}
Replacing $\pi_K$ in~\eqref{eq:risk} gives the distinct-target certificate. The deterministic crossing fraction becomes $K/N>1-b/x_0$, instead of $\log(x_0/b)$. In particular, checking every slot once guarantees a fully correct register under perfect verification. Figure~\ref{fig:finite} compares both addressing contracts.

An unrestricted broadcast is stronger still: one authoritative read followed by $N$ direct slot writes sets the whole register correctly. This baseline has one source-query cost and $N$ write costs. Our count is a verified replacement opportunity under the random-address protocol; it must not be marketed as a lower bound on arbitrary source queries. A system designer should first determine which addressing and commit operations are available before using the pulse calculator.

\section{The critical verification window}
\label{sec:window}
An expected post-pulse error below $b$ is not a high-probability recovery guarantee. To quantify the difference, assume
\begin{align}
 &g\in C^3([0,1]),\quad g(0)=0,\quad g(1)=1,\quad 0<g(x)<1\ (0<x<1),\nonumber\\
 &g(x)<x\ (0<x<b),\qquad g(x)>x\ (b<x<1),\nonumber\\
 &g(b)=b,\qquad \lambda=g'(b)-1>0.
 \label{eq:assumptions}
\end{align}
Thus the uncontrolled system has one interior, nondegenerate unstable equilibrium. Fix $x_0\in(b,1]$ and set $i_0=\lfloor Nx_0\rfloor$.

\begin{theorem}[Fixed-count critical-window law]
\label{thm:window}
Under~\eqref{eq:assumptions}, define
\begin{equation}
 \tau=\log(x_0/b),\qquad
 v_{\rm p}=b(1-b/x_0)-b^2\tau,\qquad
 v_{\rm d}=\frac{b(1-b)}{\lambda}.
 \label{eq:variances}
\end{equation}
For each fixed $s\in\mathbb R$ and $K_N=\lfloor N\tau+s\sqrt N\rfloor$,
\begin{equation}
 \lim_{N\to\infty}R_N(K_N;i_0)
 =\Phi\left(-\frac{bs}{\sqrt{v_{\rm p}+v_{\rm d}}}\right).
 \label{eq:window}
\end{equation}
Consequently, for fixed $\delta\in(0,1)$, the smallest exact budget satisfying $R_N(K;i_0)\le\delta$ obeys~\eqref{eq:headline}.
\end{theorem}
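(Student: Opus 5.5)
The plan is to compose two independent Gaussian limits using the exact certificate of Proposition~\ref{prop:exact}. The first is for the end-of-pulse count $I_{K_N}$, on the $\sqrt N$ scale around $Nb$. The second is a boundary-layer limit for the committor $h_N$ on the same scale. Since $R_N(K;i_0)=\E\,h_N(I_K)$ with $h_N$ bounded and monotone, weak convergence of $(I_{K_N}-Nb)/\sqrt N$ combined with locally uniform convergence of $h_N(Nb+y\sqrt N)$ in $y$ will give~\eqref{eq:window}. Locally uniform convergence follows from pointwise convergence because each $h_N$ is monotone and the limit is continuous and bounded (a P\'olya/Dini-type argument).

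\textbf{Step 1 (occupancy CLT).} Under the pure-death recursion~\eqref{eq:pulse} with independent uniform targets, $I_K$ is exactly the number of the $i_0$ initially wrong slots that are never hit. Writing it as a sum of indicators gives
\[
\E I_K=i_0q_1,\qquad
\operatorname{Var}I_K=i_0q_1+i_0(i_0-1)q_2-i_0^2q_1^2,
\]
where $q_m=(1-m/N)^K$. With $K_N=N\tau+s\sqrt N+O(1)$ I will expand $q_1=(b/x_0)e^{-s/\sqrt N}(1+O(1/N))$, so that $\E I_{K_N}=Nb-bs\sqrt N+O(1)$; the floor in $i_0$ contributes only $O(1)$. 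The key cancellation is $q_2-q_1^2\approx -e^{-2\tau}\tau/N$, which yields $\operatorname{Var}I_K=N[b-b^2/x_0-b^2\tau]+o(N)=Nv_{\rm p}+o(N)$. For asymptotic normality I plan to use the classical occupancy CLT (R\'enyi/Weiss) in the central regime; alternatively one can write $I_K$ as a function of independent multinomial counts and invoke a Poissonization CLT. Either route gives $(I_{K_N}-Nb)/\sqrt N\Rightarrow \N(-bs,v_{\rm p})$.

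\textbf{Step 2 (committor boundary layer).} This is the main obstacle. Write $\log(D_\ell/A_\ell)=\phi(\ell/N)$ with
\[
\phi(x)=\log\frac{x(1-g(x))}{(1-x)g(x)} .
\]
Then $\phi(b)=0$, and $\phi'(b)=(1-g'(b))/(b(1-b))=-\lambda/(b(1-b))$. By~\eqref{eq:assumptions}, $\phi>0$ on $(0,b)$ and $\phi<0$ on $(b,1)$. Hence $\log w_j=\sum_{\ell\le j}\phi(\ell/N)$ is unimodal with maximum near $j=Nb$, and by a Riemann-sum approximation with $C^3$ Taylor control,
\[
\log w_j-\log w_{\lfloor Nb\rfloor}=-\frac{\lambda (j-Nb)^2}{2Nb(1-b)}+O\!\left(\frac{|j-Nb|^3}{N^2}+1/\sqrt N\right).
\]
A Laplace argument then applies. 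For $|j-Nb|\le N^{3/5}$ the cubic error vanishes. Outside that window, the weights are bounded by $\exp(-cN^{1/5})$ near $b$ and by exponentially small quantities (in $N$) away from $b$, using a uniform bound on $\int\phi$ over compact subintervals. The endpoints need care because $\phi$ blows up like $\log$ there; the plan is to check that $\sum_j w_j$ near $j=0$ and $j=N$ is still dominated, since $w_j$ is monotone on each side of the maximum. Together these give $h_N(Nb+y\sqrt N)\to\Phi(y/\sqrt{v_{\rm d}})$ pointwise, with $v_{\rm d}=b(1-b)/\lambda$.

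\textbf{Step 3 (composition and inversion).} Let $Z\sim\N(-bs,v_{\rm p})$ and $W\sim\N(0,v_{\rm d})$ be independent. Combining Steps 1 and 2 gives
\[
R_N\to\E\,\Phi\!\left(Z/\sqrt{v_{\rm d}}\right)=\Prob(Z+W>0)=\Phi\!\left(-bs/\sqrt{v_{\rm p}+v_{\rm d}}\right).
\]
For the budget, set $s_\delta=\sqrt{v_{\rm p}+v_{\rm d}}\,\Phi^{-1}(1-\delta)/b$. For each $\eta>0$, the limit at $s_\delta+\eta$ is strictly below $\delta$ and the limit at $s_\delta-\eta$ is strictly above $\delta$. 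The monotonicity of $R_N$ in $K$ from Proposition~\ref{prop:exact} then traps the smallest admissible $K$ between $\lfloor N\tau+(s_\delta\pm\eta)\sqrt N\rfloor$ for all large $N$. This is exactly~\eqref{eq:headline} with an $o(\sqrt N)$ remainder.
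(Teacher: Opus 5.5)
Your proposal is correct. Steps 2 and 3 match the paper's appendix in substance: Laplace expansion of $\log w_j$ about $Nb$ with Gaussian control near the centre and exponential smallness beyond, then composition and quantile bracketing via monotonicity of $R_N$.

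The genuine difference is Step 1. The paper proves~\eqref{eq:pulseclt} by a martingale CLT for the pure-death chain. There, increments of size $N^{-1/2}$ make the Lindeberg condition trivial, and the fluid limit gives the conditional variance $\int_0^\tau e^{-2(\tau-t)}x_0e^{-t}(1-x_0e^{-t})\,dt=v_{\rm p}$. The indicator formula~\eqref{eq:exactvariance} appears there only as a finite-$N$ check. You make that indicator representation primary. This exposes the mean shift $-bs\sqrt N$ and the negative covariance term $-b^2\tau$ exactly at finite $N$, but it outsources normality, so cite carefully.

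The Weiss--R\'enyi theorem counts empty cells among all $N$ cells, whereas $I_K$ counts empty cells in a sub-collection of $i_0\approx x_0N$ cells. You therefore need one of the following:
\begin{itemize}
\item the CLT for separable statistics of a multinomial allocation;
\item a reduction that conditions on the binomial number of throws landing in that sub-collection, whose mixing variance $N\tau b^2(1-x_0)/x_0$ plus the classical variance $N[b-(1+\tau)b^2/x_0]$ gives exactly $Nv_{\rm p}$;
\item Poissonization followed by a conditional limit theorem given the total count.
\end{itemize}
On the third option, cell counts are independent only after Poissonization, not as multinomial counts. The de-Poissonization step is precisely what removes $b^2\tau$ from the clock variance $b(1-b/x_0)$.

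The two routes buy different things. The martingale route is self-contained and uses only the chain's one-step drift and variance, so it would survive imperfect checks or interleaved updates where no occupancy representation exists. Yours is tied to perfect, uniformly addressed checks but yields exact finite-$N$ moments.

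Your P\'olya step is a mild simplification. Monotonicity of $h_N$ upgrades pointwise convergence to uniform convergence on $\mathbb R$, so the paper's separate tightness argument is unnecessary.

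Two small corrections:
\begin{itemize}
\item The linear error in your expansion of $\log w_j$ is $O(|j-Nb|/N)$, not $O(N^{-1/2})$. It is still $o(1)$ for $|j-Nb|\le N^{3/5}$.
\item The endpoint blow-up of $\phi$ needs no separate treatment. Unimodality bounds every weight with $|j-Nb|\ge\eta N$ by the weight at distance $\eta N$.
\end{itemize}
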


\begin{proof}[Proof outline]
There are two successive stages with fresh update randomness. The pure-death pulse has mean trajectory $x(t)=x_0e^{-t}$ and a fixed-step fluctuation limit with local innovation variance $x(1-x)$. At the critical pulse duration this gives
\begin{equation}
 \sqrt N\left(\frac{I_{K_N}}N-b\right)\ \Rightarrow\ \N(-bs,v_{\rm p}).
 \label{eq:pulseclt}
\end{equation}
For the uncontrolled chain, expansion of the exact scale weights about their unique maximum at $b$ gives, locally uniformly in $z$,
\begin{equation}
 h_N(\lfloor Nb+z\sqrt N\rfloor)\longrightarrow
 \Phi\left(\frac z{\sqrt{v_{\rm d}}}\right).
 \label{eq:committorclt}
\end{equation}
Combine~\eqref{eq:pulseclt} and~\eqref{eq:committorclt} in~\eqref{eq:risk}. The expectation of one Gaussian distribution function under another Gaussian law adds their variances, yielding~\eqref{eq:window}. Monotonicity in $K$ and strict monotonicity of the limiting curve give the quantile expansion. Appendix~\ref{app:proof} supplies the martingale and scale-weight arguments.
\end{proof}

The $-b^2\tau$ term in $v_{\rm p}$ is consequential. If checks instead arrive according to independent rate-one Poisson clocks at each slot, the survivors at duration $\tau$ are independent and their variance per slot is $b(1-b/x_0)$. Conditioning on an exactly counted budget removes the additional clock variability. The fixed-count theorem cannot be replaced by a binomial survival approximation without changing its variance.

\begin{remark}[Identical drift does not identify a recovery budget]
The conformist flip rule in Kumar et al.~\citep{kumar2018}, with exponent two and unit symmetric coefficients, has $A(x)=x^2(1-x)$ and $D(x)=x(1-x)^2$. Its drift $A-D=x(1-x)(2x-1)$ equals that of our majority reader. Nevertheless, its transition variance at $b=1/2$ is half as large. For general smooth birth and death probabilities with a single unstable crossing, the same scale-weight expansion gives $v_{\rm d}=A(b)/[A'(b)-D'(b)]$. Thus the flip rule has $v_{\rm d}=1/4$, whereas the replacement rule has $v_{\rm d}=1/2$. The deterministic pulse cost agrees, but the stochastic risk margin differs. A mean-field fit alone cannot identify that margin.
\end{remark}

For majority-of-three, $b=1/2$ and $\lambda=1/2$, so $v_{\rm d}=1/2$. With $x_0=7/8$, $v_{\rm p}\approx0.07438$. In this example, post-pulse amplification contributes about 87\% of the combined critical-window variance. Merely counting how many wrong slots survive the pulse therefore omits the dominant uncertainty near the boundary.

\begin{figure}[t]
 \centering\includegraphics[width=\textwidth]{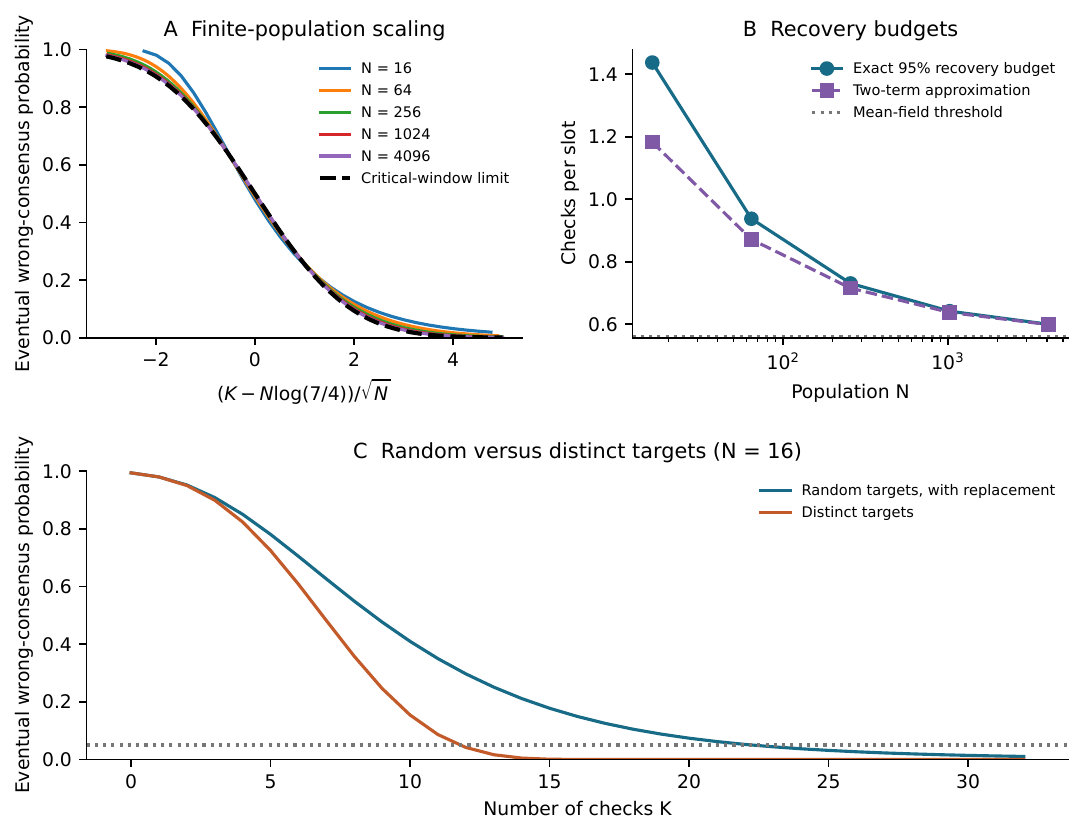}
 \caption{Exact finite-state calculations for $g_3$ and $i_0=7N/8$. (A) Rescaling the check budget by its $\sqrt N$ window approaches Theorem~\ref{thm:window}. (B) Exact 95\% recovery budgets exceed the mean-field threshold; the two-term asymptotic formula remains an approximation. (C) Avoiding repeated targets changes the budget, illustrating why addressing must be part of the system specification.}
 \label{fig:finite}
\end{figure}

\section{Computational verification of the mathematics}
\label{sec:numerics}
The artifact validates distinct parts of the calculation with different implementations. Symbolic algebra checks the saddle node and operation-order identity. Exhaustive enumeration of target slots and ordered report triples for $N=2,\ldots,7$ checks the transition probabilities against actual slot updates. A separate linear solve checks the scale-function committor for $N=4,16,64$. Exhaustive target sequences for a four-slot pulse check the pure-death distribution. These checks test finite calculations; the asymptotic theorem rests on its proof.

Table~\ref{tab:budget} gives exact budgets and asymptotic approximations. At $N=16$, the first integer budget above $N\log(7/4)$ is 9, but its exact recovery probability is only \DetRecoverySixteen\%. A budget of 23 is required for a wrong-consensus probability at most 0.05. The two-term approximation predicts about 18.93 checks. This is why the asymptotic expression is useful for scaling and interpretation, while the recurrence is used for a small-system risk target.

\begin{table}[t]
\centering\small
\caption{Perfect random checks followed by autonomous majority updates. $K_{\rm MF}=\lceil N\log(7/4)\rceil$. Risk is eventual wrong consensus, not terminal error at a fixed horizon.}
\label{tab:budget}
\begin{tabular}{rrrrr}\toprule
$N$ & $K_{\rm MF}$ & Risk at $K_{\rm MF}$ & Exact $K_{0.05}$ & Two-term $K_{0.05}$\\\midrule
\BudgetRows
\bottomrule\end{tabular}
\end{table}

As an independent stochastic check, direct arrays of 16 slots were simulated for 50,000 replicates of each of the three 128-update schedules, with 12 checks and 14 initially wrong slots. All exact terminal means were within \MaxMCZ\ Monte Carlo standard errors of the corresponding simulated means. Across these schedules, exact mean terminal wrong fractions are 0.2977 (early), 0.6919 (spread), and 0.4579 (late). The early pulse also has lower integrated error in this model. These are model calculations, not estimates of LLM behavior.

\section{A controlled language-model experiment}
\label{sec:experiment}
\subsection{Tasks, protocol, and provenance}
We evaluate reconciliation of synthetic operational records in three framings: feature rollout state, warehouse dispatch state, and active routing pool. Each record has two possible values, a random identifier, and an authoritative value selected independently of its identifier. The order of semantic values and the true answer label are randomized. Ordinary reconciliation receives three prior assistant reports and is told that reports may be stale or wrong; it receives neither the source record nor a tool. It returns a single answer under a strict JSON schema. Verification reads the deterministic source and writes its value directly, without asking the model to endorse it.

This construction isolates the propagation mechanism. Since the hidden authoritative value is randomly assigned, an ordinary model cannot recover new ground truth from its pretrained knowledge. Majority-following can be reasonable under its limited evidence. The experiment asks what this local behavior does to mutable shared state; it does not measure general factuality or reasoning skill. The three framings share one generator and are not independent real-world benchmarks.

Two snapshots are used: \texttt{gpt-4.1-mini-2025-04-14} and \texttt{gpt-4.1-nano-2025-04-14}, both at temperature 0.7 with at most 40 completion tokens. A 96-call development pilot checked the protocol and parser. It is excluded from evaluation. The main code and the amended sample plan were publicly committed before calibration and trajectory calls. The amendment reduced the number of held-out tasks from 24 to 18 per model to correct the original call-count arithmetic and stay within the prespecified ceiling; it preceded all main-study results. The exact commit and protocol records are in the repository.

Calibration uses 96 new tasks, 32 per framing, with each of the four possible wrong-report counts and both models: 768 responses. The empirical conditional rates define~\eqref{eq:bernstein}. Held-out trajectories use 18 further tasks, six per framing, for each model and schedule. Each episode has $N=16$ slots, 14 initially wrong, and 128 updates. Every schedule uses exactly 12 verified replacements and 116 model calls. Early verification occupies updates 0--11; late verification occupies updates 116--127. Spread checks occupy $\lfloor(j+1/2)128/12\rfloor$, $j=0,\ldots,11$. Target-slot and report-index streams are identical across schedules within a task. Model responses come from separate API requests, not replayed outcomes.

The 108 episodes require 12,528 trajectory responses and 1,296 deterministic verification operations. Including development and calibration gives 13,392 terminal model requests. All exact prompts, response identifiers, returned snapshot names, token counts, attempts, and errors are retained. A transport failure may be retried at most three times; a valid but unfavorable answer is never retried. An invalid terminal answer retains the target's previous value and is counted. Complete usage and failure totals appear in Appendix~\ref{app:experiment}.

\subsection{Local response and model applicability}
All 768 calibration outputs were valid. Table~\ref{tab:calibration} reports the raw counts; Figure~\ref{fig:response} shows the resulting response curves. Mini closely approximates the majority reader, with fitted coefficients $(0,2/96,95/96,1)$. Its fitted ODE has stable endpoint equilibria and an unstable equilibrium $b\approx0.4918$. Nano has coefficients $(8/96,32/96,60/96,90/96)$ and a single stable equilibrium near 0.4126. Its fitted response is monostable and has endpoint innovation, so the absorbing-consensus and critical-window theorems do not apply to it.

Observed zero errors at a calibration endpoint do not prove that its population error probability is zero. The fitted mini curve supports an illustrative comparison with the bistable model, not an empirical safety certificate. The exact guarantees in Sections~\ref{sec:finite}--\ref{sec:window} remain conditional on a declared $g$ and its assumptions.

\begin{table}[t]
\centering\small
\caption{Calibration wrong-output counts out of 96 valid responses per cell. Each row uses the same 96 task records across the four report compositions.}
\label{tab:calibration}
\begin{tabular}{lrrrr}\toprule
Model & 0 wrong reports & 1 wrong report & 2 wrong reports & 3 wrong reports\\\midrule
GPT-4.1 mini & 0 & 2 & 95 & 96\\
GPT-4.1 nano & 8 & 32 & 60 & 90\\\bottomrule
\end{tabular}
\end{table}

\begin{figure}[t]
\centering\includegraphics[width=.95\textwidth]{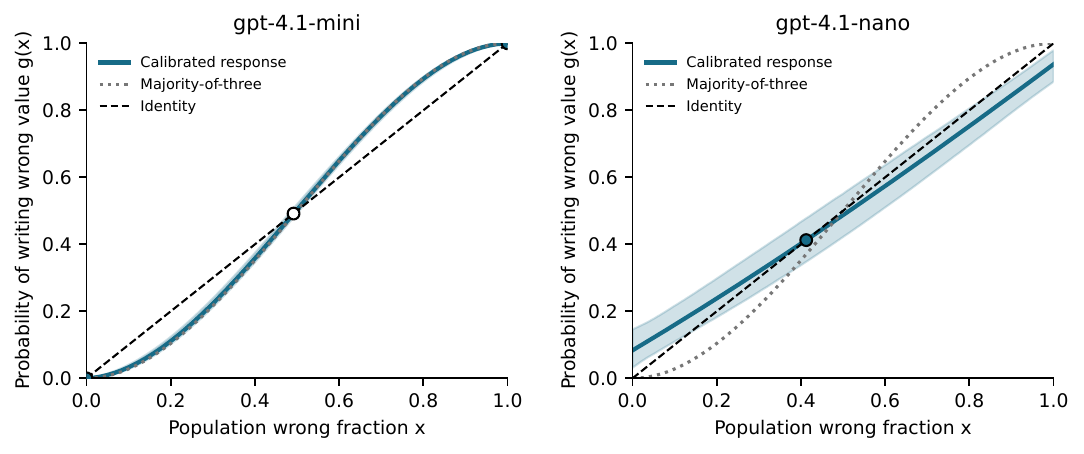}
\caption{Measured response curves constructed from four conditional probabilities. Bands are pointwise 95\% task-bootstrap intervals; they are not simultaneous guarantees for all $x$. Filled and open markers denote stable and unstable fitted equilibria. The pooled calibration curves yield different fitted response regimes.}
\label{fig:response}
\end{figure}

\subsection{Held-out schedule comparisons}
The primary outcome is the wrong fraction at update 128. Integrated error is the average wrong fraction before each of the 128 updates. Table~\ref{tab:live} reports both outcomes and terminal endpoint counts. The primary contrasts compare early and late pulses against spread checks within each model, pairing by task. We use 10,000 paired task-bootstrap resamples and 98.75\% percentile intervals for each of four comparisons, corresponding to a nominal Bonferroni family level of 95\%. These are approximate sampling intervals; the 18 task pairs per model are the independent units, not the thousands of individual calls.

\LiveResultsText

\begin{table}[t]
\centering\small
\caption{Held-out trajectories: 18 episodes per row, each with 116 model calls and 12 deterministic checks. Endpoint counts are out of 18.}
\label{tab:live}
\begin{tabular}{llrrrr}\toprule
Model & Schedule & Final wrong & Integrated wrong & All correct & All wrong\\\midrule
\LiveRows
\bottomrule\end{tabular}
\end{table}

\begin{table}[t]
\centering\small
\caption{Primary paired terminal-error differences. Negative values favor the pulse. Intervals use the declared four-comparison adjustment.}
\label{tab:contrasts}
\begin{tabular}{llrrr}\toprule
Model & Contrast & Mean difference & Lower & Upper\\\midrule
\ContrastRows
\bottomrule\end{tabular}
\end{table}

\begin{figure}[t]
\centering\includegraphics[width=.97\textwidth]{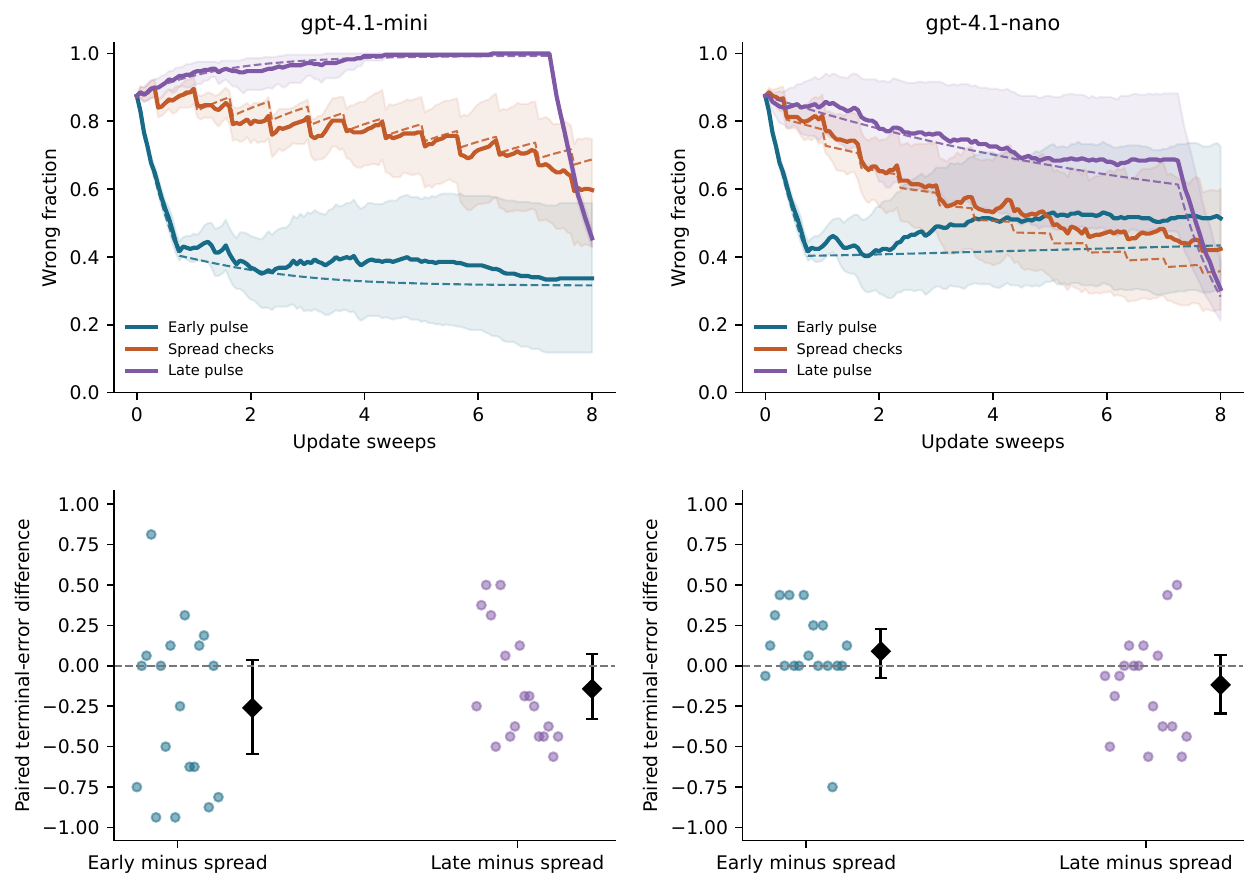}
\caption{Live held-out results. Top: mean observed wrong fractions (solid) with pointwise 95\% task-bootstrap bands and independently calibrated finite-state predictions (dashed). Bottom: individual paired task differences and means with adjusted primary intervals. The late pulse acts just before measurement; low terminal error alone is not evidence of stable autonomous recovery.}
\label{fig:live}
\end{figure}

\FloatBarrier
\subsection{Prediction audit and costs}
We propagate the finite-state transition law using calibration coefficients, the actual schedule, and the known initial count, without fitting to held-out endpoints. \PredictionText\ These comparisons assess the pooled scalar closure. Persistent differences between tasks, answer-label biases, and provider variability can violate that closure even when the sampling kernel is exact.

An additional terminal-distribution diagnostic was specified after inspecting the primary results and is reported as exploratory. For nano under the early pulse, 15 of 18 trajectories ended at an endpoint: eight all correct and seven all wrong. The pooled model predicted endpoint occupancy of only 0.063, despite a mean-trajectory absolute error of 0.066. Propagating 5,000 whole-task calibration bootstrap fits gives a pointwise 95\% interval of [\TailLower, \TailUpper] for that predicted occupancy, still well below the observed 0.833. This interval describes calibration uncertainty within the pooled model; it is not a simultaneous prediction interval or a formal model-rejection test. Every model--schedule cell is reported in the appendix. The discrepancy shows why a mean-level fit is insufficient evidence for a recovery-risk certificate. Persistent task heterogeneity or correlated output behavior are possible explanations; this experiment does not identify which mechanism causes the mismatch.

The schedule comparison matches model calls and verified writes. It does not equate all architectures by source-query cost. A deterministic majority reader is already the reference calculation in Section~\ref{sec:numerics} and needs no LLM. The broadcast baseline needs one authoritative read and writes all 16 slots, eliminating the constructed error outright. Consequently, this benchmark cannot justify paying for an LLM to solve a binary registry lookup. Its role is to test how a measured nonlinear reader interacts with verification timing. The commercial implication is a diagnostic for existing constrained reconciliation loops, not a demonstrated product revenue or general reasoning improvement.

\section{Scope, implementation, and open questions}
The implementation, called \emph{BasinPulse}, exposes the exact finite-population calculator and a separate distinct-target option. Given a declared majority response, population, initial error count, and risk target, it returns the smallest pulse budget and the risk at the preceding integer budget. The latter is an auditable witness that the reported budget is minimal within this pulse family. It does not claim that a pulse is globally optimal among all stochastic policies, or that a point-estimated LLM curve is a validated risk model.

For deployment, four questions precede the calculation. Can verified state be broadcast? Can repeated targets be avoided? Is the response genuinely reinforcing on the relevant interval? Does the verifier's error and access cost satisfy the assumed contract? When the answers change, so does the correct control problem. This is particularly relevant to modular agents whose state is updated through fixed interfaces, but our experiments do not establish how common those restrictions are in deployed systems.

Several extensions remain open. A vector state is needed for heterogeneous agents with persistent private memories or nonexchangeable networks. Nonzero spontaneous error removes absorbing endpoints and changes the target from permanent recovery to finite-horizon risk or stationary occupancy. Time-varying truth introduces tracking costs. A practical scheduler also needs uncertainty-aware estimates of the current contamination and response curve; these are not supplied by self-reported confidence alone. Finally, distinct-target checking and evidence broadcast should be compared as architectures, not merely reinterpreted as different values of $u$ in the same scalar ODE.

\section{Conclusion}
Verification changes a population through both its total budget and its temporal concentration. For a reinforcing scalar response, the deterministic crossing cost is explicit, but a finite population needs an additional risk margin. The exact pulse certificate and the critical-window law separate missed targets from later autonomous amplification. The LLM calibration yields a bistable-like fit and a monostable fit, but the latter substantially underestimates terminal endpoint occupancy. All adjusted primary schedule intervals include zero; the empirical results support a closure audit rather than a general schedule-performance claim. These results make verification scheduling a precise dynamical design question while preserving the distinction between model-conditional guarantees and empirical system behavior.

\paragraph{Data and code availability.}
Source, protocols, raw synthetic-task responses, exact calculations, tests, figure generators, and manuscript source are available at \url{https://github.com/shi1720/verification-pulses}. API credentials and private submission-management information are excluded. Offline analysis uses the recorded responses and makes no API calls.

\paragraph{Computational assistance.}
OpenAI Codex assisted with literature search, mathematical development, implementation, analysis, and manuscript preparation. The author assumes responsibility for all content.

\FloatBarrier

\appendix
\section{Proof of the critical-window law}
\label{app:proof}
\subsection{Fixed-count pulse fluctuations}
Let $Y_k=I_k/N$ during a perfect pulse. If $\mathcal F_k$ is its natural filtration, then
\begin{equation}
 Y_{k+1}=Y_k-\frac1N B_{k+1},\qquad
 B_{k+1}\mid\mathcal F_k\sim\operatorname{Bernoulli}(Y_k).
 \label{eq:martrec}
\end{equation}
With $q_N=1-1/N$, $\E Y_k=(i_0/N)q_N^k$. Define $M_{k+1}=B_{k+1}-Y_k$. Iterating the centered recursion gives
\begin{equation}
 \sqrt N\left(Y_K-\frac{i_0}{N}q_N^K\right)
 =-\frac1{\sqrt N}\sum_{j=1}^{K}q_N^{K-j}M_j.
 \label{eq:martarray}
\end{equation}
Each term is bounded by $N^{-1/2}$, so the martingale Lindeberg condition holds. The deterministic drift is Lipschitz; the bounded-increment martingale estimate and a discrete Gronwall inequality give
\begin{equation}
 \sup_{0\le t\le T}|Y_{\lfloor Nt\rfloor}-x_0e^{-t}|\xrightarrow{\Prob}0
 \label{eq:pulselfln}
\end{equation}
for every fixed $T$. More explicitly, the unweighted martingale in~\eqref{eq:martrec} has quadratic variation at most $NT/4$, so after division by $N$ its maximum is $O_{\Prob}(N^{-1/2})$ by the maximal $L^2$ inequality. The deterministic Euler error is $O(N^{-1})$. These two bounds imply~\eqref{eq:pulselfln}.

For $K=K_N=N\tau+s\sqrt N+O(1)$, the conditional variance of~\eqref{eq:martarray} converges in probability to
\begin{align}
 &\int_0^\tau e^{-2(\tau-t)}x_0e^{-t}(1-x_0e^{-t})\,dt\nonumber\\
 &\qquad=x_0e^{-\tau}(1-e^{-\tau})-x_0^2\tau e^{-2\tau}
 =b(1-b/x_0)-b^2\tau=v_{\rm p}.
 \label{eq:martvar}
\end{align}
The martingale central limit theorem~\citep[Theorem 4]{lalley2014} applies because the variance converges to a deterministic constant and the Lindeberg condition holds. Finally,
\begin{equation}
 \sqrt N\left(\frac{i_0}{N}q_N^{K_N}-b\right)\longrightarrow-bs,
\end{equation}
using $\log q_N=-N^{-1}+O(N^{-2})$ and $x_0e^{-\tau}=b$. This proves~\eqref{eq:pulseclt}.

For a finite-$N$ check on the variance, let $U_j$ indicate that initially wrong slot $j$ is unvisited. Then $I_K=\sum_{j=1}^{i_0}U_j$ and
\begin{equation}
 \operatorname{Var}(I_K)=i_0p_K(1-p_K)+i_0(i_0-1)
 \left[\left(1-\frac2N\right)^K-p_K^2\right],
 \quad p_K=\left(1-\frac1N\right)^K.
 \label{eq:exactvariance}
\end{equation}
The second term is negative: fixed-count visits are not independent. Dividing by $N$ and taking $K/N\to\tau$ recovers~\eqref{eq:martvar}.

\subsection{The scale-function boundary layer}
For $0<x<1$, put
\begin{equation}
 \ell(x)=\log\frac{x[1-g(x)]}{(1-x)g(x)}.
\end{equation}
Under~\eqref{eq:assumptions}, $\ell(x)>0$ for $x<b$ and $\ell(x)<0$ for $x>b$. Moreover,
\begin{equation}
 \ell(b)=0,\qquad
 \ell'(b)=\frac{1-g'(b)}{b(1-b)}=-\frac{\lambda}{b(1-b)}.
 \label{eq:ellprime}
\end{equation}
Let $j_b=\lfloor Nb\rfloor$. From the product definition of $w_j$, Taylor expansion over $j=j_b+\lfloor z\sqrt N\rfloor$ yields
\begin{equation}
 \log\frac{w_j}{w_{j_b}}
 =-\frac{\lambda z^2}{2b(1-b)}+o(1),
 \label{eq:laplace}
\end{equation}
uniformly for $z$ in compact sets. The rounding shifts the center by $O(1)$ indices and contributes $o(1)$ to this expansion.

To justify normalization, first take a small fixed $\eta>0$. By~\eqref{eq:ellprime}, $\ell$ has a linear sign bound near $b$, so the normalized weights for $|j-Nb|\le N\eta$ have Gaussian upper tails in $(j-Nb)/\sqrt N$, up to a uniform constant. Outside this interval, the sign of $\ell$ makes the weights decrease toward either endpoint. Their values are at most $\exp(-cN)$ times the central weight for some $c>0$ depending on $\eta$: the partial sum of $\ell$ from $b$ to $b\pm\eta$ has a strictly negative order-$N$ contribution. There are at most $N$ remaining terms. Thus the tails outside any expanding central window are negligible after normalization.

Equation~\eqref{eq:laplace} and the Riemann-sum limit consequently give a normal distribution for the index sampled with probabilities proportional to $w_j$, centered at $Nb$ with variance $Nv_{\rm d}$, where $v_{\rm d}=b(1-b)/\lambda$. Its cumulative distribution is precisely~\eqref{eq:committor}, up to a one-index shift. This proves~\eqref{eq:committorclt}, including local uniformity.

\subsection{Composition and quantile inversion}
Let $Z_N=\sqrt N(I_{K_N}/N-b)$. Its convergence in law makes it tight. On any fixed compact set, use local uniformity in~\eqref{eq:committorclt}; outside that set, use $0\le h_N\le1$ and tightness. Therefore
\begin{equation}
 \E h_N(I_{K_N})\longrightarrow
 \E\Phi\left(\frac{-bs+\sqrt{v_{\rm p}}Z_1}{\sqrt{v_{\rm d}}}\right),
\end{equation}
where $Z_1$ is standard normal. Introduce an independent standard normal $Z_2$. The last expectation equals
\begin{equation}
 \Prob\left(\sqrt{v_{\rm d}}Z_2-\sqrt{v_{\rm p}}Z_1\le-bs\right)
 =\Phi\left(-\frac{bs}{\sqrt{v_{\rm p}+v_{\rm d}}}\right).
\end{equation}
For fixed $\delta$, this limit is continuous and strictly decreasing in $s$. Bracketing its unique $\delta$ quantile by $s_\delta\pm\eta$ and applying the monotonicity of $R_N$ yields $[K_\delta-N\tau]/\sqrt N\to s_\delta$. This proves~\eqref{eq:headline}. It provides no finite-$N$ error rate for the asymptotic approximation; exact budgets use Proposition~\ref{prop:exact}.

\section{Experiment details and reproducibility}
\label{app:experiment}
\subsection{Prompt and randomization}
The system prompt is fixed across tasks and models:
\begin{quote}\small
You reconcile an operational record from prior assistant reports. The reports can be stale or wrong. Infer the most plausible current value using the information provided. You have no tools in this reconciliation step. Return only the requested JSON; do not invent additional evidence.
\end{quote}
The user message names the record, gives values A and B, and lists three reports in the template ``The recorded value is [value].'' It asks which value should be written to the shared register. The schema has one required string field, \texttt{answer}, restricted to A or B, with no additional properties. Full request bodies are retained. Report order is randomized in calibration; in trajectories it is the order of the three independently sampled slot indices.

Task-generation seeds are 91824 (development), 227641 (calibration), and 771309 (held-out). For held-out task $j$, the update stream uses seed $910000+j$: it shuffles the initial assignment, samples all 128 target indices, and then samples the $128\times3$ report-index array. The same arrays are used for each schedule and both models. Request dispatch is interleaved across independent episodes. Each episode advances serially, after its previous response has been applied. There is no conversation history beyond the current three reports.

\subsection{Recorded usage and invalid outputs}
\UsageText
\begin{table}[h]
\centering\small
\caption{Recorded API usage. Attempts include transport retries; terminal requests count the single retained outcome per request identifier. Input and output tokens are provider-reported totals over successful responses.}
\label{tab:usage}
\begin{tabular}{llrrrr}\toprule
Phase & Model & Attempts & Terminal & Input tokens & Output tokens\\\midrule
\UsageRows
\bottomrule\end{tabular}
\end{table}
The estimate of \EstimatedCost\ USD uses the documented uncached input/output prices at execution time; actual billing may differ because of caching or account-level terms. It excludes local computation and treats deterministic source reads as local operations, not paid API calls. It is a reproducibility accounting figure, not an estimate of production verification cost.

\subsection{Finite-horizon prediction}
Given calibrated $a$, form $g$ using~\eqref{eq:bernstein}. At each ordinary update, apply~\eqref{eq:rates} with $u=0$; at each verification update, use $u=1,\eps=0$. Starting from a point mass at 14 wrong slots, the full distribution is propagated through the known 128-step schedule. Its mean is compared with the mean held-out trajectory. This prediction does not replace $\E g(X)$ with $g(\E X)$ and therefore avoids an additional moment-closure approximation. The remaining closure is the use of one task-pooled, time-invariant response function.

\PredictionTable

\subsection{Reproduction commands and artifact contents}
The public repository contains the frozen task generators and API runners, the raw JSONL responses and episode records, CSV outputs, plotting code, tests, a pinned Python dependency list, and the manuscript source. Offline reproduction runs \texttt{python scripts/analyze.py}, followed by the manuscript-table generator and a LaTeX build. The \texttt{verification\_pulses.budget} module runs the exact calculator. Reissuing a model request is a new experiment and is not expected to reproduce the recorded response exactly. A manifest hashes the released inputs and outputs so that analysis can be tied to a specific artifact state. An independent replay verified every ordinary and verified update against the raw responses. Main-study requests record the frozen collection commit; development-pilot records predate that provenance field.

\end{document}